%
\documentclass[letterpaper]{article} 
\usepackage{aaai18}  
\usepackage{times}  
\usepackage{helvet}  
\usepackage{courier}  
\usepackage{url}  
\usepackage{graphicx}  
\frenchspacing  
\setlength{\pdfpagewidth}{8.5in}  
\setlength{\pdfpageheight}{11in}  

\usepackage{multirow}
\usepackage{latexsym}
\usepackage{tabularx}
\usepackage{array}
\usepackage{enumitem}
\usepackage{url}
\usepackage{amsmath, amsfonts, amssymb}
\usepackage{mathtools}
\usepackage{multirow}
\usepackage{algorithm}
\usepackage[noend]{algpseudocode}
\usepackage{capt-of}
\usepackage{array, booktabs}
\usepackage{mydefs}
\usepackage{macros}

\usepackage{color}


\usepackage{footmisc}
\DefineFNsymbols{mySymbols}{{\ensuremath\dagger}{\ensuremath\ddagger}\S\P
	*{**}{\ensuremath{\dagger\dagger}}{\ensuremath{\ddagger\ddagger}}}
\setfnsymbol{mySymbols}

\begin{document}
\title{Scalable Structure Learning for Probabilistic Soft Logic}
\author{Varun Embar\thanks{These authors contributed equally.} \and Dhanya Sridhar$^\dagger$
	\and Golnoosh Farnadi
	\and Lise Getoor
\\ University of California Santa Cruz \\ \texttt{\{vembar,dsridhar,gfarnadi,getoor\}@ucsc.edu}}

%

%
%
%
%
%

\maketitle              
\begin{abstract}

Statistical relational frameworks such as Markov logic networks and probabilistic soft logic (PSL) encode model structure with weighted first-order logical clauses.
Learning these clauses from data is referred to as \emph{structure learning}. 
Structure learning alleviates the manual cost of specifying models.
However, this benefit comes with high computational costs; structure learning typically requires an expensive search over the space of clauses which involves repeated optimization of clause weights.
In this paper, we propose the first two approaches to structure learning for PSL.
We introduce a greedy search-based algorithm and a novel optimization method that trade-off scalability and approximations to the structure learning problem in varying ways. 
The highly scalable optimization method combines data-driven generation of clauses with a piecewise pseudolikelihood (PPLL) objective that learns model structure by optimizing clause weights only once.
We compare both methods across five real-world tasks, showing that PPLL achieves an order of magnitude runtime speedup and AUC gains up to 15\% over greedy search.
	
\end{abstract}
\section{Introduction}
\label{sec:intro}
Statistical relational learning (SRL) methods combine probabilistic reasoning with knowledge representations that capture the structure in problem domains.
Markov logic networks (MLN) \cite{richardson2006markov} and probabilistic soft logic (PSL) \cite{bach2017hinge} are notable SRL frameworks that define model structure with weighted first-order logic.
However, specifying logical clauses for each problem is laborious and requires domain knowledge.
The task of discovering these weighted clauses from data is referred to as \emph{structure learning}, and has been well-studied for MLNs \cite{kok2005learning,kok2009learning,kok2010learning,mihalkova2007bottom,biba2008discriminative,Huynh:2008:DSP:1390156.1390209,khosravi2010structure,khot2015gradient}.
The extensive related work for MLNs underscores the importance of structure learning for SRL.

Structure learning approaches alleviate the cost of model discovery. However, they face several critical computational challenges.
First, even when the model space is restricted to be finite, it results in a combinatorial search.  
Second, heuristic approaches that iteratively refine and grow a set of rules require interleaving of several costly rounds of parameter estimation and scoring.
Finally, scoring the model often involves computing the model likelihood which is typically intractable to evaluate exactly.

Structure learning approaches for MLNs vary in the degree to which they address these scalability challenges.
An efficient and extensible class of MLN structure learning algorithms adopt a \emph{bottom-up} strategy, mining patterns and motifs from training data to generate informative clauses \cite{mihalkova2007bottom,kok2009learning,kok2010learning}.
The data-driven heuristics reduce the search space to useful clauses but still interleave rounds of parameter estimation and scoring, which is expensive for SRL methods.

Motivated by the success of structure learning for MLNs, in this paper, we formalize the structure learning problem for PSL.
We extend the data-driven approach to generating clauses and propose two contrasting PSL structure learning methods that differ in scalability and choice of approximations.
We build on \emph{path-constrained} relational random walk methods \cite{lao2010relational,gardner2013improving} to generate clauses that capture patterns in the data.
To find the best set of clauses, we introduce a greedy search-based algorithm and an optimization method that uses a piecewise pseudolikelihood (PPLL) objective function.
PPLL decomposes the search over clauses into a single optimization over clause weights that is solved with an efficient parallel algorithm.
Our proposed PPLL approach addresses the scalability challenges of structure learning and its formulation can be easily extended to other SRL techniques, including MLNs.
In this paper, our key technical contributions are to:
\begin{itemize}
	\item[--] formulate path-constrained clause generation that efficiently finds relational patterns in the data.
	\item [--] propose greedy search and PPLL methods that select the best path-constrained clauses by trading off scalability and approximations for structure learning.
	\item[--] validate the predictive performance and runtimes of both methods with real-world tasks in biological paper recommendation, drug interaction prediction and knowledge base completion.
\end{itemize}
We compare both proposed PSL structure learning methods and show that our novel PPLL method achieves an order of magnitude runtime speedup and AUC improvements of up to 15\% over the greedy search method.
\section{Background}
\label{sec:background}

We briefly review of structure learning for statistical relational learning (SRL) and probabilistic soft logic (PSL), the framework for which we propose structure learning approaches.

\subsection{Structure Learning for SRL}
\label{subsec:srl}

Our work focuses on SRL methods such as MLNs and PSL that encode dependencies with first-order logic.
Below, we formalize the joint distributions defined using logical clauses before outlining structure learning for these methods.

An \textbf{atom} $p(\cdot)$ consists of a predicate $p$ (e.g. \textsc{Works, Lives}) over constants (e.g. \texttt{Alice, Bob}) or variables (e.g. $A, B$). An atom whose predicate arguments are all constants is a \emph{ground} atom.
A \textbf{literal} is an atom or its negation.
A \textbf{clause} $c$ is a formula $\land_{i}L_i \lor_j L_j$ where $L_i$ and $L_j$ are literals. 
Given $n$ clauses $C = \{c_1 \ldots c_n \}$ and real-valued weights $\mathbf{w} = \{w_1 \ldots w_n\}$, a \textbf{model} $\model = \{(w_1, c_1) \ldots (w_n, c_n) \}$ is a set of clause and weight pairs.

Given constants from a domain, we substitute the variables appearing in literals over $C$ with these constants to obtain a set of \emph{ground} clauses $G_c$ for each clause $c \in C$. The corresponding set of ground atoms is $\mathbf{X} = \{X_1 \ldots X_n\}$ where each $X_i$ is a random variable with assignments $\in \{0,1\}$.
The model $\model$ defines a distribution over $\mathbf{X}$ as:
\begin{equation}
\label{eq:srlpdf}
\begin{split}
P_{\model}(\mathbf{X}) &= \frac{1}{Z} \exp(-\sum_{i=1}^n \sum_{G_{c_i}} w_i\phi_{c_i}(\mathbf{X})) \\
&\text{where} \\
Z &= \sum_{\mathbf{X}} \exp(-\sum_{i=1}^n \sum_{G_{c_i}} w_i\phi_{c_i}(\mathbf{X}))
\end{split}
\end{equation}
Each $\phi_c$ instantiated from a clause $c$ is a function over assignments to $\mathbf{X}$ that returns 0 if $c$ is satisfied by $\mathbf{X}$ values and 1 otherwise.
Intuitively, assignments that satisfy more ground rules are exponentially more probable.

The problem of \emph{structure learning} finds the model $\model$ which best fits a set of observed assignments $\mathbf{X}$, regularized by model complexity. 
We denote the set of possible clauses as the language $\mathcal{L}$.
Although $\mathcal{L}$ can be infinite, it is standard to impose restrictions that make $\mathcal{L}$ finite for structure learning.
Formally, the structure learning problem finds $C \subseteq \mathcal{L}, \mathbf{w} \in \real, N = |C|$ that maximize a regularized log likelihood function $l_{ll}(C,\mathbf{w})$ given observed assignments:
\begin{equation}
\label{eq:genericslsplitobj}
\begin{split}
&{\arg \max}_{\mathbf{w} \in \real,\ C \subseteq \mathcal{L}} l_{ll}(C,\mathbf{w}) \\
&={\arg \max}_{\mathbf{w} \in \real,\ C \subseteq \mathcal{L}} \log P_{C,\mathbf{w}}(\mathbf{X}) - r(C,\mathbf{w})
\end{split}
\end{equation}
where $r(C,\mathbf{w})$ represents priors on the weights and structure. Typical choices for $r$ combine a Gaussian prior on weights and an exponential prior on clause length. 

The log likelihood requires an exponential sum to compute $Z$ and the optimization combines a combinatorial search over $\mathcal{L}$ with a maximization of continuous weights $\mathbf{w}$ (called weight learning).
Consequently, solving structure learning requires further approximations to search and scoring. 
Approaches to structure learning broadly interleave two key components: clause generation and model evaluation, or scoring. 
The clause generation phase produces a candidate language $\mathcal{L}$ over which to search.
In practice, $\mathcal{L}$ is a subset of all possible clauses, chosen to restrict the search to useful regions of the space.
Model evaluation typically iteratively refines the existing model by learning $\mathbf{w}$ and scoring candidate clauses in $\mathcal{L}$ using approximations to $l_{ll}(C,\textbf{w})$.
\subsection{Probabilistic Soft Logic}
\label{subsec:hlmrf}

Probabilistic soft logic (PSL) is a SRL framework that defines hinge-loss Markov random fields, a special class of the undirected graphical model given by \eqnref{eq:srlpdf}.
HL-MRFs are conditional distributions over real-valued atom assignments in $[0, 1]$ and apply a continuous relaxation of Boolean logic to the ground clauses to derive $\phi_c$ of the form:
\begin{equation}
\label{eq:energy}
\phi_c(\mathbf{X}) = \max \{1 - \sum_{i \in I^+} X_i - \sum_{i \in I^-} ( 1 - X_i), 0\}^p 
\end{equation}
where $I^+$ and $I^-$ denote the set of non-negated and negated ground atoms in the clause and $p \in \{1, 2\}$.
In contrast to ground Boolean clauses that are satisfied or violated, a ground clause in soft logic returns a continuous distance to satisfaction. 
Intuitively, $\phi_c(\mathbf{X})$ corresponds to a linear or quadratic penalty for violating clause $c$.

PSL defines distributions over the target variables for a particular task conditioned on the remaining evidence variables.
Formally, given a set of \emph{target predicates} $\mathbb{P}_T$, a PSL model $\pslmodel$ consists of non-negative weights $\mathbf{w} \in \posreal$ and disjunctive clauses $\pslclause$ where the predicate for literal $T_i$ belongs to $\mathbb{P}_T$.
Given a set of atoms $\mathbf{Y}$ where random variable $Y_i \in [0, 1]$ and a set of evidence atoms $\mathbf{X}$ where each $X_i \in [0, 1]$ is an observed variable, a PSL model $\pslmodel$ defines an HL-MRF distribution of the form:
\begin{equation}
\label{eq:hlmrf}
\begin{split}
P_{\pslmodel}(\mathbf{Y} |\mathbf{X}) &= \frac{1}{Z} \exp(-\sum_{i=1}^n \sum_{G_{c_i}} w_i\phi_{c_i}(\mathbf{X},\mathbf{Y})) \\
&\text{where} \\
Z &= \int_\mathbf{Y} \exp(-\sum_{i=1}^n \sum_{G_{c_i}} w_i\phi_{c_i}(\mathbf{X},\mathbf{Y})) 
\end{split}
\end{equation}

PSL has been successfully applied to many problem including natural language processing \cite{beltagy2014probabilistic}, social media analysis \cite{johnson2016all,ebrahimi2016weakly} and information extraction \cite{platanios2017estimating}.


\section{Structure Learning for PSL}
\label{sec:problem}

Given target predicates $\mathbb{P}_T$, structure learning for PSL finds a model $\pslmodel$ to infer $t_i \in \mathbb{P}_T$.
We denote language space for PSL $\mathcal{L}_R$, which is restricted to clauses of the form $\pslclause$. We again constrain $\mathcal{L}_R$ to be finite.
To overcome the intractable likelihood score, pseudo-likelihood \cite{besag1975statistical} is an approximation that is commonly used across SRL structure learning and weight learning methods.
For HL-MRFs, the pseudo-likelihood $\hat{P}_{\pslmodel}$ approximates the likelihood as:
\begin{equation}
\label{eq:pseudolikelihood}
\begin{split}
&\hat{P}_{\pslmodel}(\mathbf{Y}|\mathbf{X}) = \prod_{Y_i \in \mathbf{Y}} \frac{1}{Z_i(\mathbf{Y},\mathbf{X})} \exp( -f_i(Y_i, \mathbf{Y},\mathbf{X}))  \\
&\text{where} \\
&Z_i(\mathbf{Y},\mathbf{X}) = \int_{Y_i} \exp(-f_i(Y_i, \mathbf{Y},\mathbf{X})) \\
&f_i(Y_i, \mathbf{Y},\mathbf{X}) = \sum_{c \in C} \sum_{j:Y_i \in G_c} w_j\phi_j(Y_i,\mathbf{X},\mathbf{Y}) 
\end{split}
\end{equation}
The notation $j:Y_i \in G_c$ selects ground clauses $j$ where $Y_i$ appears.

Given target predicates $\mathbb{P}_T$, real-valued variable assignments $\mathbf{Y}$ and $\mathbf{X}$ where each $Y_i$ atom consists of $p \in \mathbb{P}_T$, following the objective in Equation \ref{eq:genericslsplitobj}, structure learning for PSL maximizes log pseudolikelihood $l_{pll}(C, \posweight)$:
\begin{equation}
\label{eq:psl-slobj}
{\arg \max}_{C \subseteq \mathcal{L}_R, \posweight \in \posreal} \sum_{Y_i \in Y} -\log(Z_i) - \mathbf{w}^T\Phi_C(\mathbf{X},\mathbf{Y})  \\
\end{equation}
where $\Phi_C$ denotes all ground rules that can be instantiated from clauses $C$.
In the next section, we propose two approaches to the structure learning problem for HL-MRFs that rely on an efficient clause generation algorithm.
\section{Approaches to PSL Structure Learning}
\label{sec:approach}

To formulate PSL structure learning algorithms, we introduce approaches for both key method components: clause generation and model evaluation.
We outline an efficient algorithm for data-driven clause generation. 
For model evaluation over these clauses, we first propose a straightforward greedy local search algorithm (GLS). 
To improve upon the computationally expensive search-based approach, we introduce a novel optimization approach, piecewise pseudo-likelihood (PPLL). 
PPLL unifies the efficient clause generation with a surrogate convex objective that can be optimized exactly and in parallel. 

\subsection{Path-Constrained Clause Generation}
\label{subsec:clause}

The clause generation phase of structure learning outputs the language $\mathcal{L}_R$ of first-order logic clauses over which to search.
Driven by relational random walk methods used for information retrieval tasks \cite{lao2011random,gardner2013improving}, we formulate a special class of \emph{path-constrained clauses} that capture relational patterns in the data.
Path-constrained clause generation is also related to the pre-processing steps in bottom-up structure learning methods \cite{mihalkova2007bottom,kok2009learning,kok2010learning}.
Bottom-up methods typically use relational paths as heuristics to cluster predicates into templates and enumerate all clauses that contain predicate literals from the same template.
The structure learning algorithm greedily selects from these clauses.
Path-constrained clause generation also produces $\mathcal{L}_R$ prior to structure learning.
Here, we use a breadth-first traversal algorithm which directly generates informative path-constrained clauses by variablizing relational paths in the data.

The inputs to path-constrained clause generation are the ground atoms of a domain, the set of all predicates $\mathbb{P}$ and target predicate $\mathbb{P}_T$. In this work, we consider predicates with arity of two but our approach will be extended to support predicates with arity three and higher.
We begin with a running example that illustrates the definitions below.
\begin{example}
	\label{eg:reldata}
	Consider a ground atom set with \textsc{Cites}(Paper1, Paper2), \textsc{Mentions}(Paper2, Gene), \textsc{Mentions}(Paper1, Gene) and $\mathbb{P}_T = \{\textsc{Mentions}\}$. 
	In this simple example, all ground atoms have an assignment of 1. 
	In general, real-valued assignments to atoms must be rounded to 0 or 1 during path-constrained clause generation.
\end{example}

\begin{definition}
A \textbf{target relational path} for $t_i \in \mathbb{P}_T$ denoted $\pi_j^{t_i}$ is defined by an ordered list of ground atoms $[p_1(e_1, e_2), p_2(e_2, e_3) \ldots, p_s(e_{s}, e_{s+1}), t_i(e_1, e_{s+1})]$ such that
each $p_i(e_i, e_{i+1}) = 1$, its last argument $e_{i+1}$ is the first argument of $p_{i+1}(e_{i+1})$, and $t_i(e_1, e_{s+1}) \in \{0, 1\}$ is a target atom.
\end{definition}
\begin{definition}
Given a target relational path $\pi_j^{t_i}$, the corresponding \emph{first-order} \textbf{path-constrained clause} $c_{\pi_j}^{t_i}$ has the form $p_1(E_1, E_2) \wedge \ldots \wedge p_s(E_s, E_{s+1}) \rightarrow t_i(E_1, E_{s+1})$ where each $E_i$ is a logical variable and the $j$-th literal in the clause variablizes the $j$-th atom in $\pi_j^{t_i}$.
The \textbf{negation} of $c_{\pi_j}^{t_i}$ is the clause with $\neg t_i(E_1, E_{s+1})$, the target predicate literal negated.
\end{definition}

For \egref{eg:reldata}, given target relational path [\textsc{Cites}(Paper1, Paper2), \textsc{Mentions}(Paper2, Gene), \textsc{Mentions}(Paper1, Gene)], we obtain the first-order path-constrained clause:
\begin{equation*}
\footnotesize
\begin{split}
\textsc{Cites}(E_1, E_2) &\wedge \textsc{Mentions}(E_2, E_3) \rightarrow \textsc{Mentions}(E_1, E_3)
\end{split}
\end{equation*}

We generate the set of all possible path-constrained clauses $C_\Pi$ up to length $s$, by performing breadth-first search (BFS) of up to depth $s$ from the first argument $e_j$ of each target atom $t_i(e_j, e_k)$. 
\begin{definition}
	A \textbf{connected BFS search tree} $b^i_{jk}$ for training example $t_i(e_j, e_k)$ is rooted at $e_j$ and one of its leaf nodes \emph{must} be $e_k$.
	Every non-leaf constant $e_u$ in $b^i_{jk}$ has child entities $e_v$ connected by ground atoms $p_i(e_u, e_v) = 1$.
\end{definition}
For \egref{eg:reldata}, the connected BFS search tree of depth $2$ for target atom \textsc{Mentions}(Paper1, Gene) is:
$$\text{Paper1} \xrightarrow{\textsc{Cites}} \text{Paper2}  \xrightarrow{\textsc{Mentions}} \text{Gene} $$

Given a tree $b^i_{jk}$, each path from its root $e_j$ to leaf node $e_k$ is a target relational path $\pi_j^{t_i}$.
For target predicate $t_i$, $B^i = \{b_1 \ldots b_n\}$ is the set of connected BFS search trees corresponding to all $n$ target atoms.
For all $t_i \in \mathbb{P}_T$, we enumerate all such $\pi_i^{t_i}$ from each $b \in B^i$ and obtain the unique set of these paths $\Pi$. 
For each $\pi_i \in \Pi$, we form the corresponding path-constrained clause and its negation to obtain all such clauses $C_\Pi$.
Moreover, we can further restrict $C_\Pi$ to those clauses that connect $\geq t$ target atoms, preferring clauses that cover, or explain, at least training $t$ examples.
The language defined by $C_\Pi$ guides the search over models that capture informative relational patterns in the data.
Although $C_\Pi$ produces only Horn clauses and is thus a subset of the language $\mathcal{L}_R$ \cite{kazemi2018bridging}, it has been successfully used in several relational learning tasks \cite{lao2010relational,gardner2013improving}.
While our path-constrained clause generation performs well in the tasks we study, where needed, we will explore more expressive strategies.
 
\subsection{Greedy Local Search}
\label{subsec:search}

Given $N$ path-constrained clauses, exactly maximizing the pseudolikelihood objective given by \eqnref{eq:pseudolikelihood} requires evaluating $2^N$ subsets of clauses, which is already infeasible with only 100 clauses. Instead, we propose an approximate greedy search algorithm that selects locally optimal clauses in each iteration to maximize pseudolikelihood.

\begin{algorithm}
	\caption{\textbf{Greedy Local Search} (GLS)}
	\label{alg:local}
	\algrenewcommand\algorithmicrequire{\textbf{Input:}}
	\algrenewcommand\algorithmicensure{\textbf{Output:}}
	\begin{algorithmic}
		\Require $C_\Pi$: path-constrained clauses; $\epsilon$: tolerance; $l$:  max iterations
		\Ensure $C^*, \mathbf{w}$: optimal clauses and weights
		\State $S \gets C_\Pi$
		\State $C^* \gets \emptyset$
		\State $current, prev, i \gets 0$
		\While{$current - prev \geq \epsilon$ or $i \leq l$}
			\State $current \gets prev$
			\For{$s \in S$}
				\State $C^* \gets C^* \cup s$
				\State $score \gets \max_{\mathbf{w}} l_{pll}(C^*, \mathbf{w})$
				\If{$score > current$}
					\State $current \gets score$
					\State $c^* \gets s$
				\EndIf
				\State $C^* \gets C^* \setminus s$
			\EndFor
			\State $C^* \gets C^* \cup c^*$
			\State $S \gets S \setminus c^*$
			\State $i \gets i + 1$
		\EndWhile
	\end{algorithmic}
\end{algorithm}

\algoref{alg:local} gives the pseudocode for \textbf{greedy local search} (GLS) which approximately maximizes the pseudolikelihood score $l_{pll}(\cdot)$. 
GLS iteratively picks the $c^* \in C_\Pi$ that maximizes $l_{pll}(\cdot)$ and adds it to the model $M$ until the score has only improved by $\le \epsilon$ or a maximum number of iterations $l$ has been reached.
While GLS is straightforward to implement, it requires $O(Nl)$ rounds of weight learning and evaluating $l_{pll}(\cdot)$ where $N$ denotes the size of $C_\Pi$. As $N$ grows, the GLS becomes prohibitively expensive unless we sacrifice performance by increasing $\epsilon$ or decreasing $l$. To overcome the scalability pitfalls of GLS and search-based methods at large, we introduce a new structure learning objective that can be optimized efficiently and exactly.
\subsection{Piecewise Pseudolikelihood}
\label{subsec:piecewise}

The partition function $Z_i$ in pseudo-likelihood involves an integration that couples all model clauses.
Optimizing pseudo-likelihood requires evaluating all subsets of the language $\mathcal{L}_R$, necessitating greedy approximations to the combinatorial problem.
To overcome this computational bottleneck, we propose a new, efficient-to-optimize objective function called \textbf{piecewise pseudolikelihood} (PPLL). 
Below, we derive two key results which have significant consequences for scalability of structure learning: 1) with PPLL, structure learning is solved by performing weight learning once; and 2) the factorization used by PPLL admits an inherently parallelizable gradient-based algorithm for optimization.

PPLL was first proposed for weight learning in conditional random fields (CRF) \cite{sutton2007piecewise}.
For HL-MRFs, PPLL factorizes the joint conditional distribution along both random variables and clauses and is defined as:
\begin{equation}
\label{eq:piecewiselikelihood}
\begin{split}
&P^*_{\pslmodel}(\mathbf{Y} | \mathbf{X}) = \prod_{c \in C} \prod_{Y_i \in \mathbf{Y}}  \frac{\exp( -f^c_i(Y_i, \mathbf{Y}, \mathbf{X}))}{Z_i^c(\mathbf{Y},\mathbf{X})}   \\
&\text{where} \\
&Z_i^c(\mathbf{Y}, \mathbf{X}) = \int_{Y_i} \exp(-f_i^c(Y_i, \mathbf{Y}, \mathbf{X})) \\
&f_i^c(Y_i, \mathbf{Y}, \mathbf{X}) = \sum_{j:Y_i \in G_c} w_j\phi_j(Y_i, \mathbf{Y}, \mathbf{X})
\end{split}
\end{equation}
The key advantage of PPLL over pseudo-likelihood arises from the factorization of $Z_i$ into $Z_i^c$, which requires only clause $c$ and variable $Y_i$ for its computation.

Following standard convention for structure learning, we optimize the log of PPLL denoted $l_{ppll}(C,\mathbf{w})$.
We highlight a connection between PPLL and pseudolikelihood that is useful in deriving the two key scalability results of PPLL. 
The product of terms in PPLL corresponding to clause $c$ is the log pseudo-likelihood of the model containing only clause $c$. We denote this $l_{pll}^c(w_c)$:
\begin{equation}
l_{pll}^c(w_c) = \sum_{Y_i \in \mathbf{Y}} -\log(Z_i^c(\mathbf{Y}, \mathbf{X})) - f^c_i(Y_i, \mathbf{Y}, \mathbf{X})
\end{equation}

We now show that for the log PPLL objective function, performing weight learning on the model containing all clauses in $\mathcal{L}_R$ is equivalent to optimizing the objective function over the space of all models. Formally:
\begin{equation}
\begin{split}
&{\arg \max}_{\ C \subseteq \mathcal{L}_R, \textbf{w} \in \posreal} l_{ppll}(C,\textbf{w}) \\
&\equiv \\
&{\arg \max}_{\textbf{w} \in \posreal} l_{ppll}(\mathcal{L}_R,\textbf{w})
\end{split}
\end{equation}


\begin{lemma}
\label{lemma:decompose}
Optimizing $l_{ppll}(C,\textbf{w})$ over the set of weights $\textbf{w}$ is equivalent to optimizing over each $w_c$ separately.
\end{lemma}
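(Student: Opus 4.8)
The plan is to exploit the additive separability of the log PPLL objective that the piecewise factorization makes possible. The first step is to take the logarithm of the PPLL distribution in Equation~\ref{eq:piecewiselikelihood}, turning the double product over clauses $c \in C$ and variables $Y_i \in \mathbf{Y}$ into a double sum. Grouping the terms by clause and recognizing the inner sum over $Y_i$ as exactly the single-clause log pseudolikelihood $l_{pll}^c(w_c)$ already introduced above, I would write
\begin{equation*}
l_{ppll}(C, \mathbf{w}) = \sum_{c \in C} l_{pll}^c(w_c).
\end{equation*}
This rewriting is routine once the logarithm is distributed, and it exposes the structure that drives the rest of the argument.

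The crux of the proof — and the step I expect to carry the most weight — is to verify that each summand $l_{pll}^c$ is a function of $w_c$ alone, with no dependence on the weight of any other clause. This is precisely what the factorization of the partition function purchases: because $f_i^c$ sums only over the ground clauses $j : Y_i \in G_c$ instantiated from the single clause $c$, it reduces to $w_c \sum_{j : Y_i \in G_c} \phi_j$, and the local normalizer $Z_i^c$, being the integral of $\exp(-f_i^c)$ over $Y_i$, inherits the same restricted dependence. By contrast, the ordinary pseudolikelihood normalizer $Z_i$ in Equation~\ref{eq:pseudolikelihood} couples all clauses through a single integral, which is exactly the coupling that PPLL severs. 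I would make the separation fully explicit by checking that $\partial l_{pll}^{c'} / \partial w_c = 0$ whenever $c' \neq c$, so that the Jacobian of the objective is block-diagonal across clause weights.

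With separability established, the conclusion follows from a standard decomposition of box-constrained optimization. Since the feasible region is the Cartesian product $\posreal^{|C|}$ — each weight independently constrained to be non-negative, with no coupling constraints linking the coordinates — and the objective is a sum of terms each over a single, disjoint coordinate, the joint maximizer is attained coordinatewise. Formally, $\arg\max_{\mathbf{w} \in \posreal} \sum_{c \in C} l_{pll}^c(w_c) = \big( \arg\max_{w_c \in \posreal} l_{pll}^c(w_c) \big)_{c \in C}$, which is the claimed equivalence between optimizing over $\mathbf{w}$ jointly and optimizing each $w_c$ separately. The only residual care is to confirm that a maximizer exists for each coordinate so that the product vector is well defined; this is immediate since each $l_{pll}^c$ is a well-behaved concave function of its single non-negative argument, as in standard HL-MRF weight learning.
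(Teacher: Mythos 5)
Your proposal is correct and takes essentially the same approach as the paper's proof: both rest on writing $l_{ppll}(C,\mathbf{w}) = \sum_{c \in C} l_{pll}^c(w_c)$, observing that each summand depends only on its own weight $w_c$, and concluding that the joint maximization over $\mathbf{w}$ decomposes coordinatewise. Your write-up is merely more explicit about the same steps (and states the conclusion as a tuple of coordinatewise maximizers, which is cleaner than the paper's sum-of-argmax notation), but there is no substantive difference in the argument.
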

\begin{proof}
Each $l^c_{pll}(w_c)$ is a function of only $w_c$. By definition of $l_{ppll}(C,\textbf{w})$, we have
\begin{equation*}
\begin{split}
\arg \max_{\textbf{w} \in \posreal} & l_{ppll}(C, \textbf{w}) = \arg \max_{\textbf{w} \in \posreal} \sum_{c \in C} l^c_{pll}(w_c) \\
				&= \sum_{c \in C} \arg \max_{w_c \in \posreal} l^r_{pll}(w_c)
\end{split}
\end{equation*} 
\end{proof}
\begin{theorem}
\label{thm:ppll}
For PPLL, maximizing the weights $\mathbf{w}$ of the model containing all clauses in $\mathcal{L}_R$ is equivalent to optimizing the structure learning objective. 
\end{theorem}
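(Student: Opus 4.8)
The plan is to combine the clause-wise decomposition of $l_{ppll}$ with Lemma~\ref{lemma:decompose} and a short monotonicity argument showing that enlarging the clause set can never decrease the attainable optimum. First I would use the decomposition $l_{ppll}(C,\mathbf{w}) = \sum_{c \in C} l_{pll}^c(w_c)$ together with Lemma~\ref{lemma:decompose}, so that for any fixed clause set $C$ the best attainable objective factorizes as $\max_{\mathbf{w} \in \posreal} l_{ppll}(C,\mathbf{w}) = \sum_{c \in C} \max_{w_c \in \posreal} l_{pll}^c(w_c)$. This already collapses the joint search over $(C,\mathbf{w})$ into an independent per-clause decision: whether to include each clause and at what weight.

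The key fact I would establish next is that every clause's optimized contribution is bounded below by zero, with the bound attained exactly at $w_c = 0$, which is precisely the feasible weight that drops the clause from the model. Substituting $w_c = 0$ into \eqnref{eq:piecewiselikelihood} gives $f_i^c(Y_i,\mathbf{Y},\mathbf{X}) = 0$ and $Z_i^c(\mathbf{Y},\mathbf{X}) = \int_{Y_i} 1 \, dY_i = 1$, since each $Y_i$ ranges over $[0,1]$; hence $l_{pll}^c(0) = 0$. Because $0$ lies in the feasible set $\posreal$, this yields $\max_{w_c \in \posreal} l_{pll}^c(w_c) \geq 0$ for every $c$, and a clause whose optimal weight is $0$ is indistinguishable in the objective from an absent clause.

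With these two facts, the equivalence follows directly. Since each per-clause optimum is nonnegative, for any $C \subseteq \mathcal{L}_R$ we have $\sum_{c \in C} \max_{w_c} l_{pll}^c(w_c) \le \sum_{c \in \mathcal{L}_R} \max_{w_c} l_{pll}^c(w_c)$, so the full language attains the global maximum and $\max_{C \subseteq \mathcal{L}_R,\ \mathbf{w} \in \posreal} l_{ppll}(C,\mathbf{w}) = \max_{\mathbf{w} \in \posreal} l_{ppll}(\mathcal{L}_R,\mathbf{w})$. The maximizer over $\mathcal{L}_R$ then recovers the optimal structure automatically: clauses assigned weight $0$ are the excluded ones, while clauses with positive weight constitute the learned model. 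Thus structure learning reduces to a single weight-learning pass over $\mathcal{L}_R$.

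I expect the main obstacle to be the boundary step fixing $l_{pll}^c(0) = 0$: one must argue carefully that zeroing a weight both annihilates the clause's energy and normalizes its local partition function to one, which is exactly where the bounded $[0,1]$ domain of HL-MRF variables and the nonnegativity of PSL weights are essential. I would also flag a mild subtlety about the \emph{arg\,max} itself — appending zero-weight clauses leaves the objective value unchanged, so $\mathcal{L}_R$ is an optimizer but not necessarily the unique clause set; the claimed equivalence is therefore cleanest when read at the level of the optimal objective value and of the support (the nonzero coordinates) of the optimal weight vector.
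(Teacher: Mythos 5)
Your proposal is correct and follows essentially the same route as the paper's proof: apply Lemma~\ref{lemma:decompose} to decouple the optimization per clause, observe that setting $w_c = 0$ gives $l_{pll}^c(0) = 0$ so every per-clause optimum is non-negative, and conclude that including all of $\mathcal{L}_R$ attains the global optimum. You additionally spell out the boundary computation $Z_i^c = 1$ at $w_c = 0$ and flag that the equivalence is cleanest when read at the level of the optimal objective value and the support of the weight vector, both of which the paper leaves implicit.
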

\begin{proof}
\begin{equation*}
\begin{split}
& {\arg \max}_{\ C \subseteq \mathcal{L}_R, \textbf{w} \in \posreal} l_{ppll}(C,\textbf{w}) \\
& = {\arg \max}_{\ C \subseteq \mathcal{L}_R} \sum_{c \in C} \arg \max_{w_c \in \posreal} l^c_{pll}(w_c)\ [\text{Lemma } \ref{lemma:decompose}]\\
&\text{By setting $w_c = 0$, we get $l_{pll}^r(w_c) = 0$.} \\
&\text{Therefore, the maxima must be non-negative, i.e.:} \\
	&\arg \max_{w_c \in \posreal}  l^c_{pll}(w_c) \geq 0. \\
& \text{This implies that:} \\ 
&{\arg \max}_{\ C \subseteq \mathcal{L}_R} \sum_{c \in C} \arg \max_{w_c \in \posreal} l^c_{pll}(w_c) \\
& = \sum_{c \in \mathcal{L}_R} \arg \max_{w_c \in \posreal} l^c_{pll}(w_c) \\
& = {\arg \max}_{\textbf{w} \in \posreal} l_{ppll}(\mathcal{L}_R,\textbf{w})
\end{split}
\end{equation*}
\end{proof}

As a result of \thmref{thm:ppll}, instead of combinatorial search, we perform a simpler continuous optimization over weights that can be solved efficiently.
Since the objective is convex, and the weights are non-negative, we optimize the above objective using projected gradient descent. 

The projected gradient descent algorithm for optimizing the objective function is shown in \algoref{alg:piecewise}.
The partial derivative of $l_{ppll}(C, \textbf{w})$ for a given weight $w_c$ is of the form:
\begin{equation}
\label{eq:ppllgrad}
\begin{split}
&\nabla_{w_c} = \Phi_c(Y_i, \mathbf{Y}, \mathbf{X}) - \mathbb{E}_{ppll}[\Phi_c(Y_i, \mathbf{Y}, \mathbf{X})] \\ 
&\text{where} \\ 
&\Phi_c(Y_i, \mathbf{Y}, \mathbf{X}) = \sum_{Y_i \in \mathbf{Y}} \sum_{j: Y_i \in G_c} \phi_c(Y_i, \mathbf{Y}, \mathbf{X})
\end{split}
\end{equation}
The gradient for any weight $w_c$ is the difference between observed and expected penalties summed over corresponding ground clauses $G_c$. 
For both pseudo-likelihood and PPLL, we can compute observed penalties once and cache their values but the repeated expected value computations, even for a one-dimensional integral, remain costly.
However, unlike the gradients for pseudo-likelihood, each expectation term in the PPLL gradient considers a single clause. 
Thus, when evaluating gradients for weight updates in \algoref{alg:piecewise}, we use multi-threading to compute the expectation terms in parallel. 
The dual advantages of parallelizing and requiring weight learning only once makes PPLL highly scalable.
After convergence of the gradient descent procedure, we return the set of clauses with non-zero weights as the final model.

\begin{algorithm}
	\caption{\textbf{Piecewise Pseudolikelihood} (PPLL)}
	\label{alg:piecewise}
	\algrenewcommand\algorithmicrequire{\textbf{Input:}}
	\algrenewcommand\algorithmicensure{\textbf{Output:}}
	\begin{algorithmic}
		\Require $C_\Pi$: path-constrained clauses; $\epsilon$: tolerance; $l$: max iterations; $\alpha$: step size
		\Ensure $C^*, \mathbf{w}$: optimal clauses and weights
		\For {$c \in C_\Pi$}
			\State $C^* \gets c$ 
		\EndFor
		\State $i \gets 0$
		\State $score_{prev} \gets -\infty$
		\State $score_{curr} \gets l_{ppll}$
		\While{$score_{curr} - score_{prev} > \epsilon$ or $i < l$}
			\State $i \gets i+1$
			\For {$c \in C^*$}
				\State $w_c \gets w_c + \alpha \nabla_{w_c}$
				\If{ $w_c < 0$}
					\State $w_c = 0$
				\EndIf
			\EndFor
			\State $score_{prev} \gets score_{curr}$
			\State $score_{curr} \gets l_{ppll}$
		\EndWhile
		\For {$c \in C^*$}
			\If{ $w_c = 0$ }
				\State $C^* \gets C^* \setminus c$ 
			\EndIf
		\EndFor
	\end{algorithmic}
\end{algorithm}


\section{Experimental Evaluation}
\label{sec:eval}
 
\begin{table*}[th!]
	\small
	\centering
	\caption{Average AUC of methods across five prediction tasks. Bolded numbers are statistically significant at $\alpha=0.05$. We show that PPLL training improves over GLS in three out of five settings.}	
	\label{tab:auc}
	\begin{tabular}{l l l l l l}
		\toprule
		\textbf{Method} &  Fly-\textsc{Gene} & Yeast-\textsc{Gene} & DDI-\textsc{Interacts} & Freebase-\textsc{FilmRating} & Freebase-\textsc{BookAuthor}\\
		\midrule
		GLS & 0.95 $\pm$ 0.01 & 0.86 $\pm$ 0.02 & 0.66 $\pm$ 0.01 & 0.65 $\pm$ 0.04 & 0.67 $\pm$ 0.03 \\
		PPLL & {\bf 0.97 $\pm$ 0.002} & {\bf 0.90 $\pm$ 0.003}  & {\bf 0.76 $\pm$ 0.01} & 0.65 $\pm$ 0.05 & 0.65 $\pm$ 0.04 \\
		\bottomrule
	\end{tabular}
\end{table*}

\begin{figure*}
		\centering
		\label{fig:runtime}
		\includegraphics[width=8.5cm]{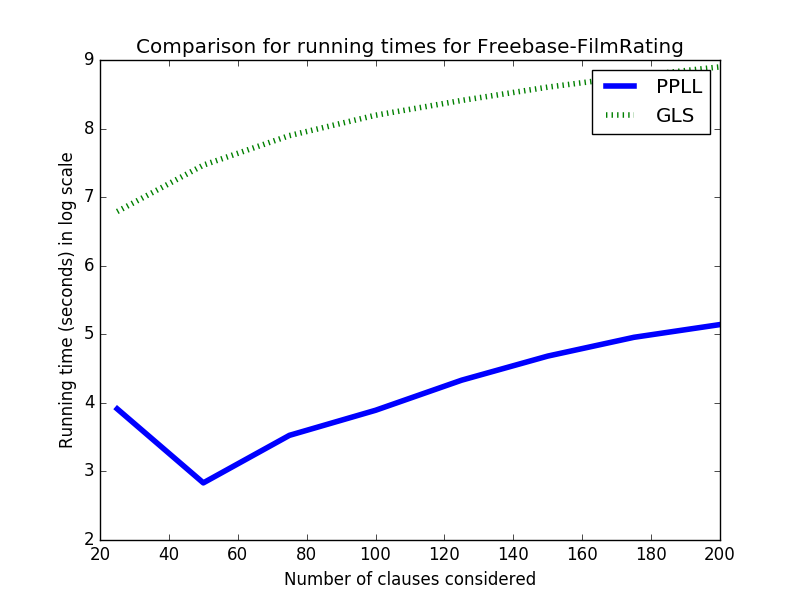}	
		\includegraphics[width=8.5cm]{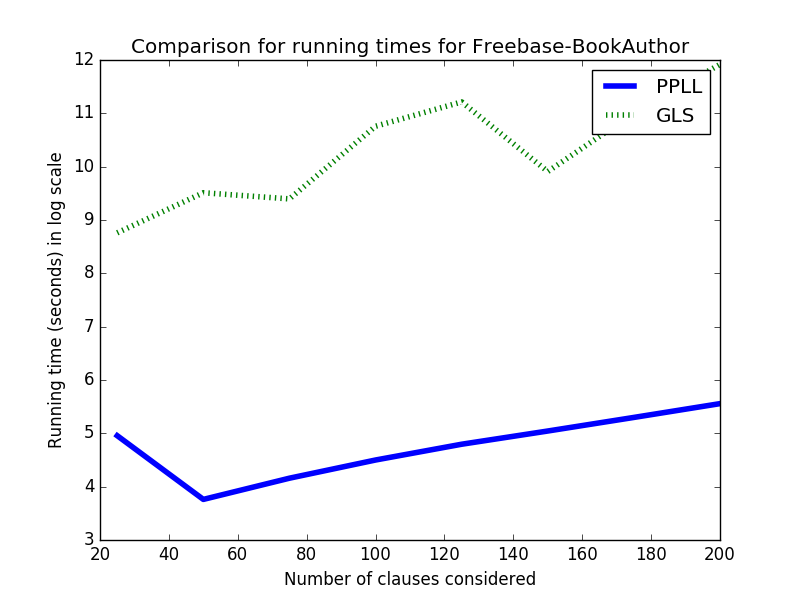}
		\caption{Running times (in seconds) in log scale on Freebase tasks. PPLL consistently scales more effectively than GLS.}	
		
\end{figure*}

%
 
The PPLL optimization method uses a fully factorized approximation for scalability while GLS greedily maximizes the less decoupled pseudolikelihood at the expense of speed.
We explore the trade-offs made by these two methods by evaluating predictive performance and scalability.
We investigate these experimental questions with five prediction tasks and compare PPLL against GLS after generating path-constrained clauses.
The evaluation tasks include paper recommendation in biological citation networks, drug interaction prediction and knowledge base completion.

\subsection{Datasets}
For our datasets, we obtain citation networks for biological publications, drug-drug interaction pharmacological networks and knowledge graphs.

\paragraph{Biological Citation Networks}

Our first dataset consists of biology-related papers and entities such as authors, venues, words, genes, proteins and chemical compounds \cite{lao2011random}. The dataset includes relations over these entity types for two domains, ``Fly'' and ``Yeast'', resulting in two citation networks. The prediction target is the \textsc{Gene} relation between genes and papers that mention them. To enforce training only on papers from the past, we partition papers into periods of time, using those from 2006 as observations, training on papers from 2007 and evaluating on papers from 2008. We randomly subsample targets to obtain 1500 train and test links, and generate five such random splits for cross-validation.

\paragraph{Drug-drug interaction}

The second dataset we use includes chemical interactions between drug pairs, called drug-drug interactions (DDI) across 196 drug compounds obtained from the DrugBank database. This dataset also contains a directed graph of relations from Drugbank between these drugs and gene targets, enzymes, and transporters. Our target for prediction is the \textsc{Interacts} relation between drugs. We subsample the tens of thousands of labeled interaction and shuffle the remaining labeled DDI links into five folds for cross-validation. Each fold contains almost 2000 labeled DDI targets. We alternate using one fold of DDI edges as observations, one for training and one for held-out evaluation.

\paragraph{Freebase}
Our third dataset comes from the Freebase knowledge graph and is well-used in validating knowledge base (KB) completion tasks \cite{gardner2014incorporating}. We study KB completion for two relations: links between films and their ratings (\textsc{FilmRating}$(\cdot)$) and links between authors and books written (\textsc{BookAuthor}$(\cdot)$). The remaining relations in the KB are observed. For both target relations, we subsample edges and split the resultant edges into five folds for cross-validation, yielding 1000 labeled edges per fold.

\subsection{Experimental Setup}

Our first experimental question evaluates predictive performance using area under the ROC curve (AUC) on held-out data with five-fold cross-validation across the five tasks described above. 
Our second question validates scalability by comparing running-times for both methods as the number of clauses grows.
For both methods, we use ADMM inference implemented in the probabilistic soft logic (PSL) framework \cite{bach2017hinge}.
For GLS, we use the pseudo-likelihood learning algorithm in PSL and implement its corresponding scoring function within in PSL \footnote{\texttt{psl.linqs.org}}.
For PPLL, we implement the parallelized learning algorithm in PSL. 
For all tasks, we enumerate target relational paths using the BFS utility in the Path Ranking Algorithm (PRA) \footnote{\texttt{github.com/matt-gardner/pra}} \cite{lao2010relational,gardner2013improving,gardner2014incorporating}
and generate path-constrained clauses from these paths.
PRA generates and includes the inverses of all atoms when performing BFS. To form clause literals from these inverses, we use the original predicate and reverse the order of its variablized arguments.

As the number of generated clauses grows, GLS becomes prohibitive as we show in our scalability results and necessitates a clause-pruning strategy.
We prune the set of clauses by retaining those that connect at least 10 target atoms and select the top 50 clauses by number of targets connected. 
For each target predicate $t_i$ in the prediction tasks detailed above, we also add a negative prior clause $\neg t_i(\cdot)$ to the candidate clauses.
For link prediction tasks, the negative prior captures the intuition that true positive links are rare and most links do not form.
We refer the reader to \cite{bach2017hinge} for detailed discussion on the importance of negative priors.
For the biological citation networks and Freebase settings, we subsample negative examples of the targets to mitigate the imbalance in labeled training data.
We perform 150 iterations of gradient descent for PPLL and 15 for GLS since it requires several rounds of weight learning.

\subsection{Predictive Performance}

Our first experimental question investigates the ramifications for predictive performance of the approximations made by each method.
PPLL approximates the likelihood by fully factorizing across clauses and target variables while GLS uses the pseudolikelihood approximation which still couples clauses.
We examine whether the decoupling in PPLL limits its predictive performance.
We generate path-constrained clauses as input to both methods and evaluate their performance on held-out data.
Table $\ref{tab:auc}$ compares both methods using AUC for all five prediction tasks averaged across multiple folds and splits. 

Table $\ref{tab:auc}$ shows that PPLL gains significantly in AUC over GLS in three out of five settings.
For the \textsc{Gene} link prediction task in the Yeast and Fly biological citation networks, PPLL also yields lower variance given the same rules. 
In the DDI setting where we predict \textsc{Interacts} links between drugs, PPLL enjoys a 15\% AUC gain over GLS from 0.66 to 0.76.
In the Freebase setting, for the \textsc{BookAuthor} task, PPLL again achieves comparable performance with GLS. 
GLS only improves slightly over the PPLL approximation in one setting, predicting \textsc{FilmRating} with a statistically insignificant gain of 0.02 in AUC.

\subsection{Scalability Study}

Our second experimental question focuses on the scalability trade-offs made by GLS and PPLL. 
PPLL requires weight learning over clauses, made faster with parallelized updates while GLS requires iterative rounds of weight learning and model evaluation.
We select the two Freebase tasks, \textsc{BookAuthor} and \textsc{FilmRating} where path-constrained clause generation initially yielded several hundred rules.
We plot the running time for both methods as the size of the candidate clause set increases from 25 to 200. 

Figure \ref{fig:runtime} shows the running times (in seconds) for both methods plotted in log scale across the two Freebase tasks as the number of clauses to evaluate increases. 
The results show that while PPLL remains computationally feasible as the number of clauses increases, GLS quickly becomes intractable as the clause set grows.
Indeed, for \textsc{BookAuthor}, GLS requires almost two days to learn a model with 200 candidate clauses. In contrast, PPLL completes in four minutes using 200 clauses in the same setting.
PPLL overcomes the requirement of interleaving weight learning and scoring while also admitting parallel weight learning updates, boosting scalability.
The results suggest that PPLL can explore a larger space of models in significantly less time.
\section{Related Work}
\label{sec:related}

	Finally, we review related work on structure learning approaches for undirected graphical models, which underpin the SRL methods we highlight in this paper. We also provide an overview of work in relational information retrieval which motivates our path-constrained clause generation. 

	For general Markov random fields (MRF) and their conditional variants, structure learning typically induces feature functions represented as propositional logical clauses of boolean attributes \cite{mccallum2002efficiently,davis2010bottom}. 
	An approximate model score is optimized with a greedy search that iteratively picks clausal feature functions to include while refining candidate features by adding, removing or negating literals to single-literal clauses.
	MRF structure learning is also viewed as a feature selection problem solved by performing L1-regularized optimization over candidate features, admitting fast gradient descent and online algorithms \cite{perkins2003grafting,zhu2010grafting}.
	
	Although structure learning has not been studied in PSL, many algorithms have been proposed to learn MLNs.
	The initial approach to MLN structure learning performs greedy beam search to grow the set of model clauses starting from single-literal clauses. The clause generation performs all possible negations and additions to an existing set of clauses while the search procedure iteratively selects clauses to refine.
	To efficiently guide the search towards useful models, bottom-up approaches generate informative clauses by using relational paths to capture patterns and motifs in the data \cite{mihalkova2007bottom,kok2009learning,kok2010learning}.
	This relational path mining in bottom-up approaches is related to the path ranking algorithm (PRA) for relational information retrieval \cite{lao2010relational}.
	PRA performs random walks or breadth-first traversal on relational data to find useful path-based features for retrieval tasks \cite{lao2010relational,gardner2013improving,gardner2014incorporating}.
	
	Most recently, MLN structure learning has been viewed from the perspectives of moralizing learned Bayesian networks \cite{khosravi2010structure} and functional gradient boosting \cite{khot2011learning,khot2015gradient}. These methods improve scalability while maintaining predictive performance.
	Alternately, approaches have been proposed to learn MLNs for target variables specific to a task of interest as we do for PSL. Structure learning methods for particular tasks use inductive logic programming \cite{muggleton1991inductive} to generate clauses which are pruned with L1-regularized learning \cite{Huynh:2008:DSP:1390156.1390209,huynh2011online} or perform iterative local search \cite{biba2008discriminative} to refine rules with the operations described above.
	
	
%
\vspace*{-0.15cm}
\section{Conclusion and Future Work}
\label{sec:disc}
In this work, we formalize the structure learning problem for PSL and introduce an efficient-to-optimize and convex surrogate objective function, PPLL. 
We unify scalable optimization with data-driven path-constrained clause generation. 
Compared to the straightforward but inefficient greedy local search method, PPLL remains scalable as the space of candidate rules grows and demonstrates good predictive performance across five real-world tasks. 
Although we focus on PSL in this work, our PPLL method can be generalized for MLNs and other SRL frameworks.
An important line of future work for PSL structure learning is extending L1-regularized feature selection and functional gradient boosting approaches which have been applied successfully to MRFs and MLNs. These methods have been shown to scale while maintaining good predictive performance.
\vspace*{-0.15cm}
\section*{Acknowledgements}
\label{sec:ack}
This work is sponsored by the Air Force Research Laboratory (AFRL) and Defense Advanced Research Projects Agency (DARPA), and supported by NSF grants CCF-1740850 and NSF IIS-1703331. We thank Sriraam Natarajan and Devendra Singh Dhami for sharing their DrugBank dataset.
\bibliographystyle{aaai}
\bibliography{structurelearning,pslpapers}

\end{document}